\newtheorem{theorem}{Theorem}
\newtheorem{lemma}{Lemma}
\newtheorem{corollary}{Corollary}
\newtheorem{definition}{Definition}
\tikzset{
    database/.style={
        path picture={
            \draw (0, 1.5*\database@segmentheight) circle [x radius=\database@radius,y radius=\database@aspectratio*\database@radius];
            \draw (-\database@radius, 0.5*\database@segmentheight) arc [start angle=180,end angle=360,x radius=\database@radius, y radius=\database@aspectratio*\database@radius];
            \draw (-\database@radius,-0.5*\database@segmentheight) arc [start angle=180,end angle=360,x radius=\database@radius, y radius=\database@aspectratio*\database@radius];
            \draw (-\database@radius,1.5*\database@segmentheight) -- ++(0,-3*\database@segmentheight) arc [start angle=180,end angle=360,x radius=\database@radius, y radius=\database@aspectratio*\database@radius] -- ++(0,3*\database@segmentheight);
        },
        minimum width=2*\database@radius + \pgflinewidth,
        minimum height=3*\database@segmentheight + 2*\database@aspectratio*\database@radius + \pgflinewidth,
    },
    database segment height/.store in=\database@segmentheight,
    database radius/.store in=\database@radius,
    database aspect ratio/.store in=\database@aspectratio,
    database segment height=0.1cm,
    database radius=0.25cm,
    database aspect ratio=0.35,
}
        \pgfpointadd{\pgfpointdecoratedinputsegmentlast}{\pgfpoint{1pt}{1pt}}
\definecolor{tomato}{HTML}{FF0000}
\definecolor{dodgerblue}{HTML}{0000FF}
\definecolor{orange}{HTML}{00FF00}
\definecolor{lime}{HTML}{FFA500}
\icmltitlerunning{Secure Data Sharing With Flow Model}
\begin{document}

\twocolumn[
\icmltitle{
Secure Data Sharing With Flow Model
}



\icmlsetsymbol{equal}{*}

\begin{icmlauthorlist}
\icmlauthor{Chenwei Wu}{equal,duke}
\icmlauthor{Chenzhuang Du}{equal,iiis}
\icmlauthor{Yang Yuan}{iiis}
\end{icmlauthorlist}

\icmlaffiliation{duke}{Department of Computer Science, Duke University, Durham, North Carolina, USA}
\icmlaffiliation{iiis}{Institute for Interdisciplinary Information Sciences, Tsinghua University, Beijing, China}

\icmlcorrespondingauthor{Chenwei Wu}{cwwu@cs.duke.edu}
\icmlcorrespondingauthor{Chenzhuang Du}{ducz20@mails.tsinghua.edu.cn}
\icmlcorrespondingauthor{Yang Yuan}{yuanyang@tsinghua.edu.cn}

\icmlkeywords{Machine Learning, ICML}

\vskip 0.3in
]


\printAffiliationsAndNotice{\icmlEqualContribution} 
\begin{abstract}
In the classical multi-party computation setting,
multiple parties jointly compute a function  without revealing their own input data. We consider a variant of this problem, where the input data can be shared for machine learning training purposes,
but the data are also encrypted so that they cannot be recovered by other parties.  
We present a rotation based method using flow model, and theoretically justified its security.
We demonstrate the effectiveness of our method in different scenarios, including supervised secure model training, and unsupervised generative model training. Our code is available at \url{https://github.com/duchenzhuang/flowencrypt}.
\end{abstract}
\section{Introduction}
\label{intro}
Data and model are the two most important factors in machine learning. 
Unfortunately, 
sometimes no one has both of them, so 
multiple parties have to collaborate together to solve certain problems.  
For the data providers, such collaboration can be risky: once the data is sent to the other parties, they may later sell or use the data without getting permissions from the data providers. This risk is becoming larger in recent years due to the larger volume of data owned by the providers.

Multi-party computation (MPC)~\cite{yao1982protocols,yao1986generate,goldreich2019play,chaum1988multiparty,ben2019completeness,bogetoft2009secure} is specifically designed for such scenarios. With rigorous theoretical guarantees, it provides one way to let multiple parties jointly compute any function and also keep each party's data private. However, this general framework comes with a price: the computational overhead of MPC is so high that currently it cannot yet support large scale computational task like neural network training.







\begin{figure*}
    \centering
    \includegraphics[width=0.8\linewidth]{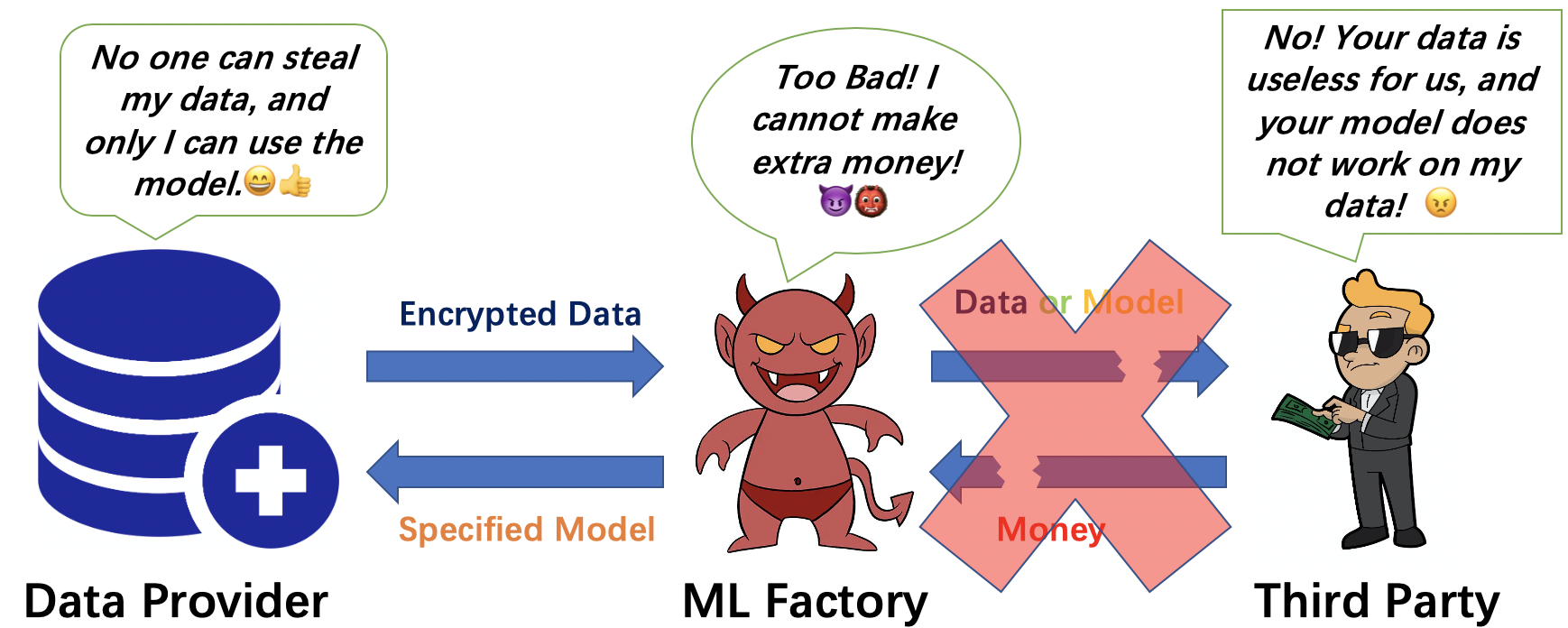}
    \caption{Illustration of the supervised secure model training scenario. The data provider wants to encrypt its private data so that the ML factory can use it to train a model, but the model can only be used by the data provider with the secret encryption key. Moreover, the data cannot be decrypted, so the ML factory cannot sell the data or the model to the third party. }
    \label{fig:setting}
\end{figure*}

\begin{figure}
    \centering
    \includegraphics[width=\linewidth]{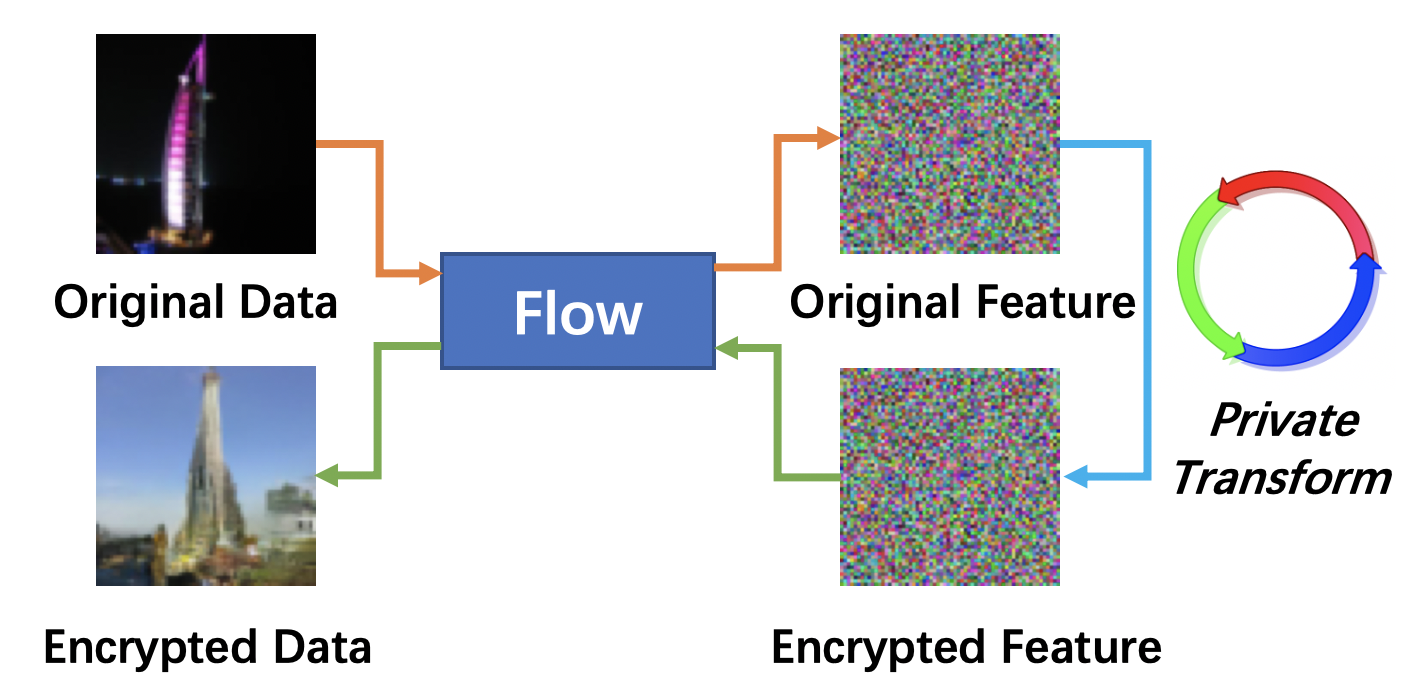}
    \caption{Illustration of our encrytion method. We first use a flow model to map the input data into the feature space, and then apply a private orthogonal transformation to the feature, and finally use the flow model to convert the encrypted feature back into the input space.}
    \label{fig:enc_method}
\end{figure}

We consider a variant of this problem, where instead of requiring the data to be \emph{completely private} so that no one gets any information about it, we only require data to be \emph{partially private}. That is, 
no one can efficiently recover the original data, 
but users can extract other useful information from the encrypted data. Although being different, 
our requirement has the flavor of differential privacy~\cite{dwork2006calibrating}, e.g., users can obtain the average salary of all employees, but cannot figure out the salary of each individual. 

Let us start with the following scenario: as a data provider, we want to hire a machine learning (ML) factory to train a model for us.
However, we do not trust the ML factory, and do not want it to sell the data or the trained model to the third party. Therefore, our encrypted data must be secure in the sense that no one can decrypt it. Moreover, the encrypted data can be used for machine learning training, but the trained model is only useful to the data providers with the secret encryption key, and is useless to the other parties. 
See Figure~\ref{fig:setting} for illustration.

To ensure security of the encryption method, 
we assume that the adversary is very powerful, i.e., it has unlimited computational power, and also knows exactly the underlying distribution of our input data (See Definition \ref{def:strong} for the formal definition).
Hence, an ideal encryption method in this setting should at least satisfy three properties:
\vspace{-0.1in}
\begin{itemize}\setlength{\itemsep}{0in}
    \item 
\textbf{Security:} 
Any adversary with the encrypted data, the distribution of the original input data, and unlimited computational power cannot decrypt our encrypted data.
\item 
\textbf{Integrity:} 
The encrypted data contain enough information for machine learning training purposes, compared with the original data. 
\item 
\textbf{Specificity:}
The models trained using the encrypted data 
only work for the encrypted test data, and  
cannot be directly applied to the original test data.
\end{itemize}


Using flow-based generative models~\cite{dinh2014nice,dinh2016density,kingma2018glow}, we propose an encryption method that satisfies these three properties with theoretical guarantees. Flow models are bijective functions that map the input data distribution into a Gaussian-like distribution, which we call the feature distribution. In our encryption method, we first privately sample a orthogonal matrix, and use it to apply an orthogonal transformation to the features of the input. After the transformation, we use the bijective flow model to convert the encrypted features back into the input space. 
See Figure~\ref{fig:enc_method} for illustration. 
The security of the encryption is ensured because the orthogonal matrix is private and cannot be recovered, as we will prove in Section~\ref{theory}.

Our encryption method is a general gadget that can be used 
in many different scenarios, not limited to the setting in Figure~\ref{fig:setting}.
For example, other parties may want to train a generative model based on our private data, 
but we do not want to share the data with them. As another example, when multiple parties are training a machine learning model together, 
they may face the data leakage problem when reporting the gradient during the training~\cite{zhu2019deep}. 
Our encryption method can be applied to both scenarios for making the private data secure. 
We will present more details about various applications of our encryption method in Section~\ref{sec:application}, as well as experimental results in Section~\ref{experiments}.
To our best knowledge, we are the first to encrypt data using invertible generative models.
We hope that our method will serve as a useful building block for data sharing and ML algorithm design in the future.

\section{Related Work}
\label{related}
Multi-party computation (MPC)~\cite{yao1982protocols,yao1986generate,goldreich2019play,chaum1988multiparty,ben2019completeness,bogetoft2009secure}
is a very powerful framework in the sense that it will keep all the input data private. 
Some of the results in MPC have been applied to the training process of neural networks: ~\citet{zhang2015privacy, li2017multi, aono2017privacy} used homomorphic encryption to encrypt the training for neural networks, and \citet{mohassel2017secureml, riazi2018chameleon}  developed new protocols for training neural networks in a 2PC setting. However, as we mentioned before, MPC-based protocols have huge computational overhead during the encryption/decryption process and people can only afford to train small networks on small datasets, so they are not yet practical in deep learning training.

There are also  papers that use differential privacy~\cite{dwork2014algorithmic} to preserve the data privacy during the training. These works only care about the privacy of the data, so the learned models can be directly applied to the original data distribution. \citet{abadi2016deep} added noise in the training to ensure differential privacy for the neural networks, \citet{vaidya2013differentially} used Naive Bayes to handle the case when there is only one data provider, and \citet{li2018differentially} extended that to the multiple-data-provider case.

Deep Generative models are useful tools of learning data distribution and generating new data. There are several kinds of deep generative models, e.g., Gerative Adversarial Networks~\cite{goodfellow2014generative}, Variational Auto-Encoders~\cite{kingma2013auto}, auto-regressive models~\cite{oord2016pixel}, and flow-based generators~\cite{kingma2018glow}. Among these, we use flow-based models because it is reversible. Flow-based generative models learn the data distribution mainly by warping Gaussian distribution and minimizing the log likelihood. To speed up the computation, people have taken several approaches, including using affine transformations~\cite{dinh2014nice,dinh2016density,kingma2018glow}, combining with auto-regressive methods~\cite{kingma2016improved}, and normalizing flows~\cite{rezende2015variational}.

Recently, \citet{huanginstahide} proposes InstaHide, which also aims to encrypt the private data while preserving the learnability to neural networks. They mixes private labeled images with public unlabeled images and applies a random sign-flip on the pixels to produce encrypted images which look like random noise. Our method, however, is invertible and hence guaranteed to preserve all the information of the original data.

\section{Preliminaries}
\label{theory-setting}
We use $\mathcal{D}$ to denote the underlying distribution of the input data points. 
Assume that our private data set contains $n$
data points $\{s_i\}_{i=1}^n\sim\mathcal{D}$, 
where $s_i \in \mathbb{R}^m$. 
We want to construct a mapping $Enc:\mathbb{R}^{m}\to\mathbb{R}^{m}$
for the data encryption. 
In this paper, we assume the data points are images. 
This is without loss of generality, because  our approach can easily generalize to other kinds of data.

\subsection{Total variation distance}
We use the following metric for measuring the distance between two distributions. 
\begin{definition}[Total variation distance]
For two distributions $P$ and $Q$ defined on domain $\mathcal{D}$, we define the total variation distance as 
\begin{equation}
\delta (P,Q)\triangleq \sup _{A\in {\mathcal {D}}}\left|P(A)-Q(A)\right|
\end{equation}
\end{definition}

For the total variation distance, we later need the following lemma that effectively removes the factor $n$ in the exponent.
\begin{lemma}[\cite{hoeffding1958distinguishability}, Eq.(4.5)]
\label{lem:sandwich}
For two distributions $P$ and $Q$, any integer $n\geq 1$, the following inequalities hold: 
\begin{equation}
\delta(P^n, Q^n)\leq 1-\left(1-\delta(P, Q)\right)^n\leq n\delta(P, Q).
\end{equation}
\end{lemma}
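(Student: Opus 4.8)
The plan is to establish both inequalities in Lemma~\ref{lem:sandwich}, namely $\delta(P^n, Q^n)\leq 1-(1-\delta(P,Q))^n$ and $1-(1-\delta(P,Q))^n \leq n\delta(P,Q)$, and I would tackle them in the opposite order from how they are written. The second inequality is purely a fact about real numbers: setting $x = \delta(P,Q)\in[0,1]$, I need $1-(1-x)^n \leq nx$. This follows immediately from Bernoulli's inequality $(1-x)^n \geq 1-nx$, valid for $x\in[0,1]$ and integer $n\geq 1$, which itself is a one-line induction on $n$. So I would dispose of this half first as a warm-up.

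The first inequality is the substantive one. The key structural fact I would invoke is that total variation distance between product measures can be controlled via a coupling argument, or equivalently via the complementary quantity $1-\delta$, which behaves multiplicatively under products. Concretely, I would recall the coupling characterization $\delta(P,Q) = \inf_{\pi}\Pr_{(X,Y)\sim\pi}[X\neq Y]$, where the infimum is over all couplings $\pi$ of $P$ and $Q$. Given an optimal coupling $\pi$ of the single-sample distributions achieving $\Pr[X\neq Y] = \delta(P,Q)$, the plan is to form the product coupling $\pi^{\otimes n}$ of $P^n$ and $Q^n$ by drawing $n$ independent copies $(X_i,Y_i)$. Under this coupling the two full samples agree, i.e. $(X_1,\dots,X_n) = (Y_1,\dots,Y_n)$, precisely when every coordinate agrees, which by independence happens with probability $(1-\delta(P,Q))^n$. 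Hence the probability that the samples differ is $1-(1-\delta(P,Q))^n$, and since $\delta(P^n,Q^n)$ is the infimum over all couplings of the disagreement probability, it is at most the disagreement probability of this particular product coupling, giving exactly $\delta(P^n,Q^n) \leq 1-(1-\delta(P,Q))^n$.

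The main obstacle, and the step I would be most careful about, is justifying the coupling characterization of total variation distance itself, since the definition given in the paper is the supremum form $\sup_A |P(A)-Q(A)|$ rather than the coupling form. I would need either to cite the standard equivalence between the two (the coupling inequality / maximal coupling lemma) or to argue the first inequality directly from the supremum definition. A self-contained alternative avoiding couplings is to write $P^n$ and $Q^n$ in terms of their densities and use the identity $\delta(P,Q) = 1 - \int \min(p,q)$; the "overlap" $\int\min(p,q) = 1-\delta(P,Q)$ then factorizes across the $n$ independent coordinates to give $\int\min(p^{\otimes n}, q^{\otimes n}) \geq (1-\delta(P,Q))^n$, from which the bound on $\delta(P^n,Q^n) = 1 - \int\min(p^{\otimes n},q^{\otimes n})$ follows. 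I expect the factorization of the overlap integral (or equivalently the independence of the coordinatewise couplings) to be the crux, and I would make sure the inequality direction is handled correctly, since $\int\min$ of a product dominates, but is not equal to, the product of the $\int\min$ terms only under the right coupling construction. Either route reduces the lemma to elementary real-analytic and probabilistic facts, so the proof should be short once the coupling or overlap identity is in hand.
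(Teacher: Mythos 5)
Your proposal is correct, but note that the paper itself offers no proof of this lemma at all: it is stated as an imported result, cited to Hoeffding and Wolfowitz (1958), Eq.\ (4.5), and used as a black box to derive the corollary of the main theorem. So your argument is not an alternative to the paper's proof; it supplies the proof the paper delegates to the reference. Both of your routes are sound: the second inequality is indeed just Bernoulli's inequality $(1-x)^n \geq 1-nx$ on $x=\delta(P,Q)\in[0,1]$, and the first follows either from the product of maximal couplings or from the overlap identity $\delta(P,Q)=1-\int\min(p,q)\,d\mu$ (with $\mu$ a common dominating measure such as $(P+Q)/2$). One refinement to your plan: the overlap route is more self-contained than you suggest, because the key step needs no coupling at all --- for nonnegative reals one has the pointwise bound $\min\bigl(\prod_i a_i,\prod_i b_i\bigr)\geq\prod_i\min(a_i,b_i)$ (each of $\prod_i a_i$ and $\prod_i b_i$ dominates $\prod_i\min(a_i,b_i)$), and integrating over the product space with Fubini gives $\int\min(p^{\otimes n},q^{\otimes n})\geq\bigl(\int\min(p,q)\bigr)^n=(1-\delta(P,Q))^n$ directly, hence $\delta(P^n,Q^n)\leq 1-(1-\delta(P,Q))^n$. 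The coupling route works too, but it carries the extra burden you correctly identified: you must invoke (or construct) the maximal coupling achieving $\Pr[X\neq Y]=\delta(P,Q)$, plus the easy direction of the coupling inequality for $P^n,Q^n$; the overlap route avoids both and is the cleaner way to discharge the crux you flagged.
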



\subsection{Flow model}

Flow-based generative models~\cite{dinh2014nice,dinh2016density,kingma2018glow}, or simply flow models, are one kind of generative models. They assume that the data can be converted into a feature space by a bijective function $f$. That is, given any input $x\in \mathbb{R}^m$, there exists a corresponding latent variable $z\in \mathbb{R}^m$, such that $z=f(x)$, and $x=f^{-1}(z)$. Moreover, the latent variables follow some simple distributions, e.g., multivariate Gaussian distribution.

In other words, flow models can effectively convert the original data distribution (which does not have analytical form) into much simpler distributions like Gaussian distribution, and this mapping is invertible. There are many good flow models, and in this paper we mainly use the Glow model~\cite{kingma2018glow}. We remark that if better flow models are proposed in the future, they can also be applied in our setting to improve the empirical performance of the encryption. 


\subsection{Encryption Algorithm}
\label{alg}
Our encryption algorithm is  sketched in Algorithm~\ref{alg:enc}. Overall speaking, given the private data, we first train a flow model $f$, which learns a bijection between the input space and the feature space. After that, we map each original image to a feature, ``rotate'' that feature using a uniformly-sampled orthogonal matrix, and then ``recover'' the rotated feature into a new image, which is the encryption of the original image. 

\begin{algorithm}[htbp]
   \caption{Encryption of Private Image Data}
   \label{alg:enc}
\begin{algorithmic}
   \STATE {\bfseries Input:} Original images $\{s_i\}_{i=1}^n$, each of size $m$
   \STATE Train a flow model $f$ using $\{s_i\}_{i=1}^n$
   \STATE Sample $A$ uniformly from all $m\times m$ orthogonal matrices
   \FOR{$i=1$ {\bfseries to} $n$}
   \STATE $x_i=f(s_i)$
   \STATE $\hat{x}_i=Ax_i$
   \STATE $Enc(s_i)=f^{-1}(\hat{x}_i)$
   \ENDFOR
\end{algorithmic}
\end{algorithm}

Although being extremely simple, Algorithm~\ref{alg:enc} has many nice properties, as we will see in  Section~\ref{theory} and Section~\ref{sec:application}.
Based on the encryption algorithm, 
we can define various feature distributions. 

\begin{definition}[Feature distributions]
We define $\mathcal{G}_g$ to be the distribution of the features $x_i$, and $\mathcal{H}_0\triangleq\mathcal{N}(0,I_{m})$ is the standard normal distribution. Also, define $\mathcal{H}_1$ to be the distribution of rotated features $\hat{x}_i$.
\end{definition}

\subsection{Adversary and orthogonal matrix}
Below we formally define the power of our adversary. 
\begin{definition}[Strong Adversary]
\label{def:strong}
An adversary is called \textbf{strong} if it knows the encryption algorithm (except the private orthogonal matrix $A$), the encrypted data, the underlying distribution $\mathcal{D}$ of the original input data, and has unlimited computational power.
\end{definition}

This kind of adversary is very strong in the sense that it not only knows the encrypted data, but also have total information about the underlying distribution of the input. Besides, it can even use
the flow model $f$ in our encryption algorithm. The goal of this adversary is to recover the orthogonal matrix $A$ from these information. 
To formally define the notion of successful recovery, we first need to define volume and ball in the space of orthogonal matrices. 

\begin{definition}[Normalized volume]
Let $U$ be the set of all orthogonal matrices with size $m\times m$, and let $T$ be a subset of $U$. Define the normalized volume of $T$ be
\begin{equation}
v(T)\triangleq \mathbb{P}[A\in T],
\end{equation}
where $A$ is sampled uniformly from all $m\times m$ orthogonal matrices.
\end{definition}

\begin{definition}[$(\theta,A)$-ball]
Let $U$ be the set of all orthogonal matrices with size $m\times m$. 
Given an orthogonal matrix $A \in \mathbb{R}^{m\times m}$, $\delta>0$, the ball centered at $A$ with radius $\delta$ is defined as
\begin{equation}
\mathcal{B}_{\delta}(A)\triangleq \{
M | \| M-A\|_F \leq \delta , M\in U
\}.
\end{equation}
Given any $\theta\in (0,1]$,
let $\delta^*\triangleq 
\min \{\delta |
v(\mathcal{B}_{\delta}(A))\geq \theta\}$. 
We define $(\theta,A)$-ball as
$\mathcal{B}_{\delta^*}(A)$.
\end{definition}

Essentially, we use Frobenius norm to define a ball centered at matrix $A$ with normalized volume $\theta$. Here Frobenius norm is picked for convenience, and our analysis can be applied to other metrics as well. 
Now we can define whether a recovery is successful. 
\begin{definition}[successful recovery with tolerance $\theta$]
A recovery of matrix $A$ by an adversary is a \textbf{successful recovery with tolerance $\theta$} if the output $\tilde{A}$ of the adversary is within the $(\theta, A)$-ball. 
\end{definition}

In other words, if the output $\tilde{A}$ is close to $A$ in terms of Frobenius norm, we say the recovery is successful.  



\section{Main Theorem}
\label{theory}
In this section, we will first present the main theorem, and then give the proof for the simple case and the general case in Section~\ref{motivating-example} and Section~\ref{geneeral-case}, respectively. 



\begin{theorem}[Main Theorem]
\label{main-thm}
The success probability of any strong adversary trying to recover $A$ with tolerance $\theta$ cannot exceed $\delta(\mathcal{H}_0^n, \mathcal{H}_1^n)+\theta$.
\end{theorem}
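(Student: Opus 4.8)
The plan is to compare the \emph{real world}, in which the adversary observes genuine encrypted data, against an \emph{ideal world} in which the underlying features are exactly Gaussian, and to show that (i) in the ideal world no adversary can beat random guessing, and (ii) the two worlds are statistically indistinguishable up to $\delta(\mathcal{H}_0^n,\mathcal{H}_1^n)$. First I would reduce the adversary to a function of the rotated features. Because $f$ is a bijection that the strong adversary is allowed to know, observing $Enc(s_i)=f^{-1}(\hat{x}_i)$ is equivalent to observing $\hat{x}_i=f(Enc(s_i))$; hence, without loss of generality, the adversary is a (possibly randomized) map $\tilde{A}=g(\hat{x}_1,\dots,\hat{x}_n)$. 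The relevant randomness is $A$ drawn from the Haar measure on the orthogonal group, the features $x_i\sim\mathcal{G}_g$ drawn i.i.d., and $\hat{x}_i=Ax_i$, with success meaning $\tilde{A}\in\mathcal{B}_{\delta^*}(A)$.

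Next I would analyze the ideal world, where the features are replaced by $x_i\sim\mathcal{H}_0=\mathcal{N}(0,I_m)$. Since the standard Gaussian is rotation invariant, $Ax_i$ is again $\mathcal{N}(0,I_m)$, so the observed data is distributed as $\mathcal{H}_0^n$ \emph{independently of} $A$. Consequently $\tilde{A}$ is independent of $A$, and using that $\tilde{A}\in\mathcal{B}_{\delta^*}(A)\iff A\in\mathcal{B}_{\delta^*}(\tilde{A})$ together with the bi-invariance of the Haar measure and of the Frobenius norm (which makes the volume of a radius-$\delta^*$ ball the same for every center), I would conclude
\begin{equation}
\mathbb{P}[\tilde{A}\in\mathcal{B}_{\delta^*}(A)]=\mathbb{E}\big[v(\mathcal{B}_{\delta^*}(\tilde{A}))\big]=\theta,
\end{equation}
i.e. in the ideal world every adversary succeeds with probability exactly $\theta$.

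Finally I would transfer this to the real world. Fixing $A$, the success event is a fixed measurable subset of the feature-sequence space, and the only thing that changes between the two worlds is the law of that sequence: $\mathcal{H}_0^n$ in the ideal world versus $\mathcal{H}_1^n$ in the real world. The definition of total variation then bounds the gap in success probabilities by $\delta(\mathcal{H}_0^n,\mathcal{H}_1^n)$; moreover this quantity does not depend on $A$, because total variation is invariant under the bijection $A$ and $A$ fixes $\mathcal{H}_0$, so in fact $\delta(\mathcal{H}_0^n,\mathcal{H}_1^n)=\delta(\mathcal{H}_0^n,\mathcal{G}_g^n)$ measures only the imperfection of the flow model. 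Averaging over $A$ then yields the claimed bound $\theta+\delta(\mathcal{H}_0^n,\mathcal{H}_1^n)$.

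The main obstacle is the ideal-world step: one must argue carefully that rotation invariance of the Gaussian genuinely decouples the observation from $A$, and that bi-invariance of the Haar measure makes the ball volume center-free, so that ``no information about $A$'' translates into a success probability of exactly $\theta$ rather than merely an upper bound. A secondary technical point is making the total-variation transfer uniform in $A$ and valid for a randomized adversary (which one handles by conditioning on its internal randomness), and noting that continuity of the ball-volume function in the radius—valid for $m\ge 2$—ensures $\delta^*$ realizes the volume $\theta$ exactly, which is what prevents the ideal-world probability from exceeding $\theta$.
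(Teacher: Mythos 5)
Your proposal is correct, and in substance it is the same argument as the paper's: both proofs rest on (i) rotation invariance of the Gaussian, which makes the ideal world (features exactly $\mathcal{N}(0,I_m)$) carry no information about $A$, so any adversary there succeeds with probability $\theta$, and (ii) a total-variation payment of $\delta(\mathcal{H}_0^n,\mathcal{H}_1^n)$ for swapping the ideal world for the real one. The difference is in packaging. The paper phrases step (ii) as a cryptographic-style reduction: it converts a recoverer $O_2$ with success probability $p$ into a distinguisher $O_1$ for $\mathcal{H}_0^n$ versus $\mathcal{H}_1^n$ with success probability $\frac{(1-\theta)+p}{2}$, and then invokes the Bayes-risk bound $\frac{1+\delta(\mathcal{H}_0^n,\mathcal{H}_1^n)}{2}$ for binary hypothesis testing. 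You instead apply the definition of total variation directly to the success event, with $A$ and the adversary's internal randomness fixed, and then average over $A$. These are algebraically the same inequality---the Bayes-risk bound is precisely the statement that no event's probability changes by more than the total variation---so your route is a more elementary unrolling of the paper's reduction that needs no named lemma. What your write-up adds, and the paper glosses over, are three details that both proofs actually need: that the ideal-world success probability is \emph{exactly} $\theta$ (not merely at least $\theta$), which requires the ball volume to be center-free (left invariance of the Haar measure and of the Frobenius distance) and the radius-to-volume map to be continuous (no atoms, valid for $m\ge 2$); that the bound is well defined independently of the secret $A$, since $\delta(\mathcal{H}_0^n,\mathcal{H}_1^n)=\delta(\mathcal{H}_0^n,\mathcal{G}_g^n)$ by invariance of total variation under the bijection $x\mapsto Ax$; and the conditioning step that handles randomized adversaries. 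The paper's claim that $O_1$ succeeds with probability $1-\theta$ when $b=0$ silently relies on exactly these facts, so your version is, if anything, a more careful rendering of the same proof.
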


Our main theorem gives an upper bound of the success probability of any strong adversary that tries to recover matrix $A$.
This upper bound has two terms, the $\theta$ term and the total variation term, which we elaborate below.  

The $\theta$ term is completely controlled by the data provider.  For example, if $\theta=0.5$, that means we 
define the recovery to be successful if the adversary can find an orthogonal matrix that is closer to $A$ compared with at least half of all possible orthogonal matrices. 
In this case, the adversary can simply do a random guess to get $0.5$ success probability. Indeed, the \emph{$\theta$ term is inevitable}, because adversary can always randomly sample matrix that is in $(\theta,A)$-ball with probability $\theta$. 

The total variation term measures the distance between 
the feature distribution and the Gaussian distribution.
Intuitively, 
the Gaussian distribution is 
the most difficult case for the adversary because it is rotation-invariant. In other words, before and after applying the orthogonal transformation, the data look the same to the adversary, which makes the recovery of the exact $A$ impossible. 
Therefore we have this term in the upper bound, indicating that if the distance to the Gaussian distribution is smaller, it will be harder to decrypt the matrix $A$.
In practice, a better flow model will make this distance smaller.

Using Lemma~\ref{lem:sandwich}, we immediately get the following corollary, which removes the exponent $n$ in the upper bound. 
\begin{corollary}
The success probability of any adversary trying to recover $A$ with tolerance $\theta$ cannot exceed $n\delta(\mathcal{H}_0, \mathcal{H}_1)+\theta$.
\end{corollary}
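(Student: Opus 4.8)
The plan is to bound the success probability of the strong adversary by reducing the problem to distinguishing between two distributions, and then invoking the total variation distance to control that distinguishing power. Let me think through the structure.

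The key observation is this. The strong adversary sees the encrypted data $\{Enc(s_i)\}_{i=1}^n = \{f^{-1}(\hat x_i)\}_{i=1}^n = \{f^{-1}(A x_i)\}_{i=1}^n$. Since the adversary knows $f$ (and $f$ is a fixed bijection), applying $f$ to each encrypted point recovers exactly the rotated features $\{\hat x_i\}_{i=1}^n$, so without loss of generality the adversary's input is the i.i.d. sample $\hat x_1,\dots,\hat x_n$ drawn from $\mathcal H_1$. Recovering $A$ with tolerance $\theta$ means outputting some $\tilde A$ landing in the $(\theta,A)$-ball.

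**First step.** I would formalize the adversary as a (possibly randomized) map from the observed sample to an output matrix $\tilde A$. Because the adversary has unlimited computational power, I may as well take it to be an arbitrary measurable function (randomization does not help against a worst-case bound, so I would condition on the best deterministic strategy). The quantity to bound is the probability, over the random draw of $A$ uniform on $U$ and the sample $\hat x_i = A x_i$, that $\tilde A \in \mathcal B_{\delta^*}(A)$.

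**Second step — the coupling/comparison argument.** Here is the heart of it. Consider a "fake" world in which the adversary instead receives $n$ i.i.d. samples from the Gaussian $\mathcal H_0 = \mathcal N(0,I_m)$. I would argue that in this fake world the adversary's output $\tilde A$ is statistically independent of $A$, because $\mathcal H_0$ is rotation-invariant: the sample carries no information about $A$ at all. Hence in the fake world the probability that $\tilde A$ lands in $\mathcal B_{\delta^*}(A)$, averaged over the independent uniform $A$, is exactly $v(\mathcal B_{\delta^*}(A)) = \theta$ by the definition of the $(\theta,A)$-ball. The real world differs from the fake world only in that the sample is drawn from $\mathcal H_1^n$ rather than $\mathcal H_0^n$. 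Any event's probability can shift by at most the total variation distance between the two sample distributions when we swap one for the other, which gives
\[
\mathbb P_{\text{real}}[\tilde A \in \mathcal B_{\delta^*}(A)] \le \mathbb P_{\text{fake}}[\tilde A \in \mathcal B_{\delta^*}(A)] + \delta(\mathcal H_1^n, \mathcal H_0^n) = \theta + \delta(\mathcal H_0^n, \mathcal H_1^n).
\]
This yields exactly the claimed bound.

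**The main obstacle.** The subtle point I would need to handle carefully is the rotation-invariance argument establishing independence in the fake world, together with the averaging over the uniform prior on $A$. Specifically, I must justify that when the sample is drawn from the rotation-invariant $\mathcal H_0$, the joint law of (sample, $A$) factorizes so that the success event has probability exactly $\theta$ regardless of the adversary's strategy — this uses that $\mathcal H_0^n$ is invariant under the map $\hat x_i \mapsto A x_i$ when $x_i \sim \mathcal H_0$, so the sample the adversary sees has the same distribution for every fixed $A$, making $\tilde A \perp A$. The second delicate point is the total variation swap: I must confirm that the success event is a well-defined measurable event in the product space so that $|\mathbb P_P[E] - \mathbb P_Q[E]| \le \delta(P,Q)$ applies with $P = \mathcal H_1^n$, $Q = \mathcal H_0^n$ marginalized appropriately over the shared auxiliary randomness $A$. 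Everything else is bookkeeping; once these two facts are in place the bound follows immediately, and the corollary is just Lemma~\ref{lem:sandwich} applied to the total variation term.
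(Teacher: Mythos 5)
Your proposal is correct, and it reaches the corollary by the same two-stage route as the paper: first establish the bound $\delta(\mathcal{H}_0^n,\mathcal{H}_1^n)+\theta$ of Theorem~\ref{main-thm}, then remove the exponent via Lemma~\ref{lem:sandwich}. Where you genuinely differ is in how that first bound is proved. The paper argues by an explicit reduction to binary hypothesis testing: given a recovery oracle $O_2$ with success probability $p$, it constructs a distinguisher $O_1$ whose success probability is $\frac{(1-\theta)+p}{2}$ (the Gaussian branch uses rotation invariance, as in Lemma~\ref{toy-lem}), and then cites the Bayes-risk bound $\frac{1+\delta(\mathcal{H}_0^n,\mathcal{H}_1^n)}{2}$ from \cite{wainwright2019high}. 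You instead give a direct hybrid argument: in the fake world with samples from $\mathcal{H}_0^n$ the output $\tilde A$ is independent of $A$, so the success probability is exactly $\theta$ (this needs, as you note, Haar invariance to make ball volumes center-independent, plus the symmetry $\tilde A\in\mathcal{B}_{\delta^*}(A)\iff A\in\mathcal{B}_{\delta^*}(\tilde A)$), and swapping $\mathcal{H}_0^n$ for $\mathcal{H}_1^n$ moves the probability of the measurable success event by at most $\delta(\mathcal{H}_0^n,\mathcal{H}_1^n)$, by the definition of total variation. The two arguments are mathematically equivalent---unrolling the proof of the Bayes-risk bound inside the paper's reduction produces precisely your inequality---but yours is more elementary and self-contained (no oracle construction, no external hypothesis-testing lemma), and it is more explicit about two points the paper glosses over: that $\mathcal{H}_1$ depends on $A$ while $\delta(\mathcal{H}_0^n,\mathcal{H}_1^n)$ does not (so conditioning on $A$ before averaging is legitimate), and why the total-variation swap applies to the success event. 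What the paper's packaging buys in exchange is the explicit interpretation that recovering $A$ is at least as hard as distinguishing the feature distribution from a Gaussian. One caveat, shared equally by both proofs and hence not a gap in yours: both take $v(\mathcal{B}_{\delta^*}(A))=\theta$ exactly, which requires the volume function $\delta\mapsto v(\mathcal{B}_{\delta}(A))$ to be continuous at $\delta^*$; otherwise $\theta$ must be replaced by $v(\mathcal{B}_{\delta^*}(A))\geq\theta$ throughout.
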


\subsection{Proof of the simple case}
\label{motivating-example}

Let's start the proof with a simple example where $\delta(\mathcal{H}_0, \mathcal{H}_1)=0$, i.e., the distribution $\mathcal{G}_g$ is exactly equal to standard normal distribution. In this case, $\delta(\mathcal{H}_0^n, \mathcal{H}_1^n)=0$, so our main theorem can be simplified to the following lemma.

\begin{lemma}
\label{toy-lem}
If $\mathcal{G}_g=\mathcal{H}_0\triangleq\mathcal{N}(0,I_{m})$, then the success probability of any adversary trying to recover $A$ with tolerance $\theta$ cannot exceed $\theta$.
\end{lemma}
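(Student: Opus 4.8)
The plan is to prove Lemma~\ref{toy-lem} by exploiting the rotation-invariance of the standard Gaussian. The central observation is that when $\mathcal{G}_g = \mathcal{H}_0 = \mathcal{N}(0, I_m)$, the feature vectors $x_i = f(s_i)$ are i.i.d.\ standard Gaussian, and since the standard Gaussian is invariant under any orthogonal transformation, the rotated features $\hat{x}_i = A x_i$ are \emph{also} i.i.d.\ standard Gaussian. Consequently $\mathcal{H}_1 = \mathcal{H}_0$, and more importantly, the joint distribution of the encrypted dataset $\{Enc(s_i)\}_{i=1}^n = \{f^{-1}(A x_i)\}_{i=1}^n$ that the adversary observes does \emph{not depend on the choice of $A$ at all}. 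This is the crux: from the adversary's point of view, every orthogonal matrix $A$ induces exactly the same distribution over the observed encrypted data, so the encrypted data carries no information that could distinguish the true $A$ from any other orthogonal matrix.

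First I would make the observation precise. Fix any orthogonal matrix $A$. Because $A x_i \sim \mathcal{N}(0,I_m)$ whenever $x_i \sim \mathcal{N}(0, I_m)$, the tuple $(\hat{x}_1, \dots, \hat{x}_n)$ has the same distribution for every $A$, namely $\mathcal{H}_0^n$. Applying the fixed, publicly-known map $f^{-1}$ coordinatewise preserves this property, so the distribution of the encrypted dataset is identical across all $A$. Next I would model the adversary formally: a strong adversary is an arbitrary (possibly randomized) measurable function that takes the encrypted data $\{Enc(s_i)\}$ (together with the public information $f$ and $\mathcal{D}$, which are fixed) and outputs a guess $\tilde{A}$. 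Since its only $A$-dependent input is the encrypted data, and that input's distribution is independent of $A$, the joint distribution of $(\tilde{A}, A)$ factorizes: $\tilde{A}$ is statistically independent of the uniformly-random $A$.

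I would then bound the success probability directly. The recovery succeeds iff $\tilde{A} \in \mathcal{B}_{\delta^*}(A)$, the $(\theta, A)$-ball. Conditioning on the adversary's output $\tilde{A} = M$ and using independence, $A$ is still distributed uniformly over the orthogonal group, so the conditional success probability is $\mathbb{P}[A \in \mathcal{B}_{\delta^*}(M)]$. By the symmetry of the Haar (uniform) measure on the orthogonal group under the map $A \mapsto A^{-1} M$ (or equivalently, because Frobenius distance and the uniform measure are both bi-invariant), the event $A \in \mathcal{B}_{\delta^*}(M)$ has the same probability as $M \in \mathcal{B}_{\delta^*}(A)$, which is exactly the normalized volume $v(\mathcal{B}_{\delta^*}(A)) = \theta$ by the definition of the $(\theta, A)$-ball. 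Averaging over $\tilde{A}$ then gives total success probability at most $\theta$.

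The main obstacle, and the step deserving the most care, is the symmetry argument that lets me identify $\mathbb{P}[A \in \mathcal{B}_{\delta^*}(\tilde A)]$ with $\theta$. This relies on the bi-invariance of the Haar measure on the orthogonal group together with the fact that the Frobenius norm $\|M - A\|_F$ is invariant under simultaneous left- or right-multiplication of both $M$ and $A$ by a fixed orthogonal matrix, so the normalized volume of a Frobenius ball of a given radius is the same regardless of where it is centered. This guarantees $\delta^*$ and the volume $\theta$ are independent of the center, so that even though the adversary may adaptively choose $\tilde A$, each possible output yields success probability exactly $\theta$ against the still-uniform $A$. I would verify this invariance carefully, since the entire argument collapses to the clean bound $\theta$ precisely because the ball volume is center-independent.
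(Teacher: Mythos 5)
Your proof is correct and follows essentially the same route as the paper's: rotation-invariance of the standard Gaussian makes the distribution of the encrypted data independent of $A$, hence the adversary's output $\tilde{A}$ is independent of the uniformly drawn $A$ and lands in the $(\theta,A)$-ball only with probability $\theta$. Your additional conditioning step and the bi-invariance argument for the center-independence of Frobenius-ball volumes simply make rigorous the final step that the paper states informally (``any output from the adversary is equivalent to a random guess, so the success probability cannot exceed $\theta$'').
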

\begin{proof}
If $\mathcal{G}_g=\mathcal{N}(0,I_{m})$, i.e., $\forall i\in[n], x_i\sim\mathcal{N}(0,I_{m})$, we can compute the distribution of $\hat{x_i}$:
\begin{align}
\mathbb{E}[\hat{x_i}]&=A\mathbb{E}[x_i]=A\cdot 0 = 0.\\
Var(\hat{x_i}) &= A\cdot Var(x_i)A^\top = AA^\top = I_{m}.
\end{align}
Note that the distribution of $\hat{x_i}$ is still a Gaussian distribution, and it has the same expectation and variance as $x_i$. Thus, 
\begin{equation}
\mathcal{H}_0=\mathcal{H}_1.
\end{equation}

Therefore, for all orthogonal matrix $A$, the distribution of $Enc(s_i)$ are the same, i.e., $A$ is independent of the distribution of the encrypted images. This means that given the encrypted images, any output from the adversary is equivalent to a random guess over all orthogonal matrices. From the definition of tolerance, we know that the success probability of any adversary trying to recover $A$ with tolerance $\theta$ cannot exceed $\theta$.
\end{proof}

This lemma tells us that if the distribution of the features is exactly standard normal distribution, i.e., the first term in the upper bound is 0, then the only possibility that the adversary can succeed is that the random guess is within the tolerance. In the next part, we will provide a formal proof for the general case, i.e., our main theorem.

\subsection{Proof of the general case}
\label{geneeral-case}

In this part, we will show the proof of our main theorem, which is a generalization of the proof for Lemma \ref{toy-lem}.

We first provide the intuition of this proof: What the adversary can observe is only the encrypted data, which can be considered as some samples of the distribution of $Enc(s_i)$. From the proof of Lemma \ref{toy-lem} we know that if $\mathcal{G}_g=\mathcal{N}(0,I_{m})$, the adversary will output a random guess of matrix $A$. Thus, given $n$ samples of the encrypted data distribution, if the adversary cannot \textbf{distinguish} between $\mathcal{H}_1$ and $\mathcal{H}_0$, it will be impossible for him to do better than random guess in recovering matrix $A$.

Let us formalize this intuition into a full proof:
\begin{proof}[Proof of Theorem \ref{main-thm}]

Firstly, we restrict this problem in feature space: The data-to-feature map (flow model) is a bijection and known to the adversary, and the adversary has unlimited computational power. Thus, given any data or distribution in the image space, the adversary can map it to the feature space using the data-to-feature map, and vice versa. Therefore, it suffices to only consider the data and distributions in the feature space.

We will prove that if we can recover the matrix $A$ with high probability, then we can distinguish very similar distributions with high probability, which is impossible. Below we first formally define the two related problems, and then we will do a recursion between them. 

\textbf{Problem 1 (distinguishing distributions):} Given the Gaussian distribution $\mathcal{H}_0=\mathcal{N}(0,I_{m})$,  feature distribution $\mathcal{G}_g$, rotated feature distribution $\mathcal{H}_1$, and matrix $A$ uniformly sampled from the set of all orthogonal matrices. Sample a bit $b$ uniformly from $\{0, 1\}$ (this bit is hidden from the solver), and take $n$ i.i.d. samples $\{x_i\}_{i=1}^n\sim\mathcal{H}_b^n$. We want to find $b$.

\textbf{Problem 2 (recovering matrix $A$):}
Given a feature distribution $\mathcal{G}_g$, and samples of rotated features $\{\hat{x_i}\}_{i=1}^n\sim (\mathcal{H}_1)^n$ and the  we want to recover the rotation matrix $A$.

For a strong oracle $O_2$ for \textbf{Problem 2}, assume that its success probability is $p$. 
Note that this success probability is the expectation taken over the samples $\{\hat{x_i}\}_{i=1}^n$, the choice of orthogonal matrix $A$, and the randomness of $O_2$ itself. 

We remark that $O_2$ can also take $\mathcal{G}_g$ and 
$\{\hat {x_i}\}_{i=1}^n\sim (\mathcal{H}_0)^n$ as the input because the input shape matches. However, since \textbf{Problem 2} expects
$\hat {x_i}$ to be sampled from $H_1$, if we use $H_0$ instead, $O_2$ may generate unexpected outputs without any guarantees on the success probability. 

We now use this strong oracle $O_2$ to construct an oracle $O_1$ for \textbf{Problem 1}: Let
\begin{equation}
    O_1=\mathbbm{1}_{\text{success}}[O_2(G_g, x_1, \cdots, x_n)],
\end{equation}

In other words, $O_1$ first assumes $b=1$, and then 
calls $O_2$ with the feature distribution and the $n$ samples, receive the output matrix $\tilde A$ of $O_2$.
$O_1$ outputs $1$ if 
$\tilde A$ is a successful recovery with tolerance $\theta$ to $A$, and $0$ otherwise.

Define the success probability of $O_1$ be the expected probability of returning the correct bit $b$, where the expectation is taken over the randomness of bit $b$ and the oracle $O_2$.

When $b=0$, i.e., the samples come from the Gaussian distribution, from the proof of Lemma \ref{toy-lem} we know that
no algorithm can get any information about $A$, i.e., the output of $O_2$ is independent with $A$. Since $A$ is a randomly sampled orthogonal matrix, 
we know that when $b=0$, 
the success probability of  $O_1$(i.e., the probability of outputting $0$) is $1-\theta$.


When $b=1$, we know that its success probability is equal to the success probability of $O_2$, which is $p$. Therefore, the overall success probability of $O_1$ is  $\frac{(1-\theta)+p}{2}$, as $b$ is uniformly sampled from $\{0,1\}$.

However, from the Bayes risk of binary hypothesis test~\cite{wainwright2019high}, we know that the maximum success probability for \textbf{Problem 1} is $\frac{1+\delta(\mathcal{H}_0^n, \mathcal{H}_1^n)}{2}$.

Thus,
\begin{equation}
\frac{(1-\theta)+p}{2}\leq \frac{1+\delta(\mathcal{H}_0^n, \mathcal{H}_1^n)}{2},
\end{equation}
i.e.,
\begin{equation}
    p\leq \delta(\mathcal{H}_0^n, \mathcal{H}_1^n)+\theta.
\end{equation}
\end{proof}

\section{Applications}
\label{sec:application}
In this section we discuss various applications of our encryption method. We believe that this method can serve as an important building block for
protecting privacy in machine learning. 

\subsection{Supervised secure model training}
Our first example is the setting described in Figure~\ref{fig:setting}. To solve this problem, we may directly apply our Algorithm~\ref{alg:enc} to each class of images. That is, assuming that there are $N$ classes,  we train a flow model $f_i$ for the $i$-th class.
In the dataset, for each pair of data points $(x,y)$, we encrypt $x$ to get $\hat x$ using $f_y$, and send $(\hat x, y)$ to the ML factory. Notice that we cannot train a simple flow model for all classes because after the whole dataset encryption using a single model, the input-label correspondence might be lost, but independent encryption for each class does not have this problem.  

This solution  satisfies the three properties that we required: Security, Integrity and Specificity. The Security property holds because the ML factory cannot decrypt the orthogonal matrix for each class, as we shown in Theorem~\ref{main-thm}. Therefore, it cannot recover the original data points. 

The Integrity property 
holds because we apply orthogonal transformation in the feature space, which has semantic meanings. After the transformation, the encrypted features follow a similar Gaussian-like distribution, so they still contain enough information for machine learning training purposes. We will further empirically demonstrate this property in our experiments.

We verify the Specificity property of our method by running simulations, 
as we will see in Section~\ref{experiments:supervised}. That is, after being trained on the encrypted data, the model does not have good prediction accuracy on the original data distribution. In order to apply the model, one needs to first encrypt the data using the secret orthogonal matrix $A$. In other words, only the data provider can use the trained model. 




\subsection{Unsupervised generative model training}
\label{sec:app:unsup}
Consider the scenario that the ML factory wants to learn a generative model using the private unlabeled data from the data provider. The data provider agrees to let ML factory train the model, but does not want it to see the private dataset. In this case, we can apply our Algorithm~\ref{alg:enc} to encrypt the data, and ask the ML factory to train the generative model using the encrypted data. The intuition is, after the orthogonal transformation in the feature space, the encrypted data still contain enough semantic information for the data distribution (the Integrity property). 

As we demonstrate in Section~\ref{experiments:unsupervised}, this solution works well, and the encrypted data can indeed be used for training a very good generative model.

\subsection{Data leakage with gradients}
\label{sec:app:leak}
In this scenario, multiple agents collaborate together to train a model, and each agent has its own private data. As argued by \citet{zhu2019deep}, each agent faces the problem of data leakage when sharing the gradients with the other agents.


Specifically, for the synchronous distributed training, 
in $t$-th iteration , agent $i$ samples a minibatch $(x_{t,i},y_{t,i})$ to compute the gradients $\nabla W_{t,i}$
\begin{equation}
    \nabla W_{t,i} = \frac{\partial \ell(F(x_{t,i},W_{t}),y_{t,i})}{\partial W_{t}}
\end{equation}
where $\ell$ is the loss function, $F$ is the neural network function, and $W_t$ is the synchronized weight for the network at the iteration $t$. Then the gradients are averaged across the $N$ agents, so the weight $W_t$ is updated as:
\begin{equation}
    W_{t+1} = W_{t} - \eta \frac{1}{N}\sum_{j=1}^N \nabla W_{t,j}
\end{equation}

The data leakage happens if an adversary has the information of
the gradients, model architecture and weights. In this case, the input data can be recovered by minimizing the distance between the actual gradients and gradients obtained from the randomly initialized dummy input $x'$ and label $y'$~\cite{zhu2019deep}, 
where the optimization is applied to $x'$ and $y'$. The gradient from $x'$ and $y'$ is defined as
\begin{equation}
    \nabla W' = \frac{\partial \ell(F(x',W),y')}{\partial W}
\end{equation}

Our encryption can help avoid data leakage under the assumption that  all the agents can be trusted, and they share the common private orthogonal matrices for each class of data. In this setting, all the data points can be encrypted, and the adversary will not be able to learn the private data from the gradients without knowing the orthogonal matrices. As we will show in Section~\ref{experiments:leakage}, the adversary can only recovery the encrypted data using the gradients. 


\begin{figure}[thbp]
\centering

\subfigure[Original Image]{
\begin{minipage}[t]{0.3\linewidth}
\centering
\includegraphics[width=0.7in]{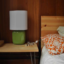}
\end{minipage}%
\begin{minipage}[t]{0.3\linewidth}
\centering
\includegraphics[width=0.7in]{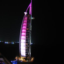}
\end{minipage}%
\begin{minipage}[t]{0.3\linewidth}
\centering
\includegraphics[width=0.7in]{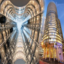}
\end{minipage}
}%

\subfigure[Encrypted Image]{
\begin{minipage}[t]{0.3\linewidth}
\centering
\includegraphics[width=0.7in]{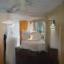}
\end{minipage}
\begin{minipage}[t]{0.3\linewidth}
\centering
\includegraphics[width=0.7in]{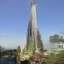}
\end{minipage}
\begin{minipage}[t]{0.3\linewidth}
\centering
\includegraphics[width=0.7in]{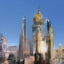}
\end{minipage}
}%

\centering
\caption{ \label{fig:encyrpte-result}\textbf{Visualization of the images before and after encryption}. The first row contains original images and the second row contains the corresponding encrypted images. 
}
\end{figure}

\section{Experiments}
\label{experiments}
In this section, we present the experimental results of our method for different scenarios discussed in Section~\ref{sec:application}: supervised secure model training (Section~\ref{experiments:supervised}), unsupervised generative model training (Section~\ref{experiments:unsupervised}) and data leakage with gradients (Section~\ref{experiments:leakage}). 
We use PyTorch~\cite{paszke2019pytorch} for all the experiments.


\textbf{Glow Model~\cite{kingma2018glow}.}  
We use Glow model in our experiments. 
Glow model can learn distribution on unbounded space, but the data have bounded interval range. To reduce the impact of boundary effects, we model the density as logit(\(\alpha + (1-\alpha)\bigodot{x/256}\)) following~\cite{dinh2016density}. 
In our implementation, 
we use activation normalization as our normalization method, which is a scale and bias layer with data dependent initialization. For permutation, we use an (learned) invertible 1 $\times$ 1 convolution, where the weight matrix is initialized as a random rotation matrix. Also, we use the Affine coupling layer instead of the Additive coupling layer. We set \#levels to $3$, steps to $16$, learning rate to $0.001$ and batch size to $512$.

We visualize some figures before and after the encryption in Figure~\ref{fig:encyrpte-result}. 
As we can see, original images and their corresponding encrypted images visually belong to the same class. This shows that the Glow model can indeed capture the semantic meanings of the inputs.

\subsection{Supervised secure model training}
\label{experiments:supervised}
In this subsection, we run  experiments on CIFAR10 and LSUN with Glow and classifiers including ResNet~\cite{he2016deep} and DenseNet~\cite{huang2017densely}.  

\begin{table}[th]
\begin{center}
\caption{\label{tab:Sup-learn} \textbf{Test accuracy of different models on different datasets.} There shows the test accuracy of different models on different datasets to validate the effectiveness of our method on supervised learning.}

\begin{tabular}{ |c|c|c|c| } 
\hline
\multicolumn{2}{|c|}{}&\multicolumn{2}{|c|}{Neural Network}\\
\hline
TrainSet& TestSet& ResNet50 & DenseNet121 \\
\hline
\multirow{2}{4em}{Original CIFAR}&Ori-CIFAR & 95.51 & 95.54 \\ 
&Enc-CIFAR & 44.49 & 45.83 \\ 
\hline
\multirow{2}{4em}{Encrypted CIFAR}&Ori-CIFAR & 33.69 & 38.45 \\ 
&Enc-CIFAR & 88.53 & 88.73 \\ 
\hline
\multirow{2}{4em}{Original LSUN}&Ori-LSUN & 84.86 & 85.99 \\ 
&Enc-LSUN & 69.00 & 69.77 \\ 
\hline
\multirow{2}{4em}{Encrypted LSUN(1)}&Ori-LSUN & 55.83 & 58.33 \\ 
&Enc-LSUN & 92.31 & 93.21 \\ 
\hline
\multirow{2}{4em}{Encrypted LSUN(2)}&Ori-LSUN & 25.53 & 30.10 \\ 
&Enc-LSUN & 77.96 & 78.69 \\ 
\hline
\end{tabular}
\end{center}
\end{table}

\textbf{Experiment Details.}  We train our model on two natural image datasets: CIFAR-10~\cite{krizhevsky2009learning} and LSUN~\cite{yu15lsun}. For every dataset, we train a separate Glow model for each individual class. Every CIFAR-10 class has $5,000$ data points, while LSUN is much larger. Except $126,227$ data points for the class of outdoor church and $168,103$ data points for the class of classroom, we use randomly sampled $150,000$ data points for all other classes to train Glow models. We call this case  LSUN(1).
We also tried another experiment where each Glow model uses only $5,000$ data points for training, which is called  LSUN(2). Essentially, in the first setting, the encryption method uses a more powerful Glow model trained with more data points.



All the images are resized to $32*32$ and all Glow models are using the same hyper parameters as mentioned above. 
For LSUN, we sampled $60,000$ data points for model training ($50,000$) and testing ($10,000$).
As we proved in Theorem~\ref{main-thm}, our encryption method is secure. Therefore, it remains to see whether it also has the Integrity and Specificity property, which we elaborate below.

\textbf{Integrity.}
In Table~\ref{tab:Sup-learn}, we present the experimental results of different models on different datasets. As we can see, after being trained on the encrypted data, both ResNet and DenseNet can achieve very good accuracy on the encrypted test data, i.e., $88\%$ for CIFAR-10, and $92\%-93\%$ for LSUN(1). It is worth pointing out that the test accuracy for the encrypted case is not as high as the accuracy for the original case for CIFAR-10. We believe that this is because CIFAR-10 is a small dataset, and cannot be used for training good Glow models. By contrast, in LSUN(1), when we use very large dataset to train the Glow models, the accuracy is increased significantly, even more than the accuracy in the original case. This is because the original case does not have the extra information from Glow models, and only relies on the information from the $50,000$ training data. As a result, in LSUN(2), when the Glow models are trained using $5,000$ data points each, the accuracy in the encrypted case dramatically decreased. Hence, when well trained flow models are available, the Integrity property can be ensured in practice. 

\textbf{Specificity.}
In Table~\ref{tab:Sup-learn}, when the models are trained using encrypted datasets, they will have good accuracy on encrypted test data, but bad  accuracy on the original test data. In other words, the trained model can only be used by the data provider who knows how to encrypt data, and cannot be used by other parties.

\begin{table}[th]
\centering
\caption{\label{tab:compare-glow}\textbf{Comparing the Glows based on BPDs.}  In the table below, we present the BPDs of CIFAR-10 and ImageNet calculated by the two glows where the first glow is trained on the original CIFAR and the second glow is trained on the encrypted CIFAR. }
\begin{tabular}{ccc}
\hline
Glows-Similarity& CIFAR-10 & ImageNet\\
\hline
Glow-original& 3.49& 3.76\\
Glow-encrypted& 3.67& 3.93 \\
Glow-untrained& 9.75& 9.66 \\
\hline
\end{tabular}
\end{table}

\subsection{Unsupervised generative model training}
\label{experiments:unsupervised}

In this subsection, we present experimental results for the unsupervised generative model training described in Section~\ref{sec:app:unsup}. We hope that the generative model trained on the encrypted data is as good as the one trained on the original data. 
For simplicity, we also use Glow as the generative model for training. But in practice, one can also use the encrypted data to train GANs or VAEs. 
Below we show 
both quantitative  and qualitative experimental evaluations.

\begin{figure}[htbp]
\centering

\subfigure[Original Image]{
\begin{minipage}[t]{0.17\linewidth}
\centering
\includegraphics[width=0.4in]{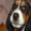}
\end{minipage}%
\begin{minipage}[t]{0.17\linewidth}
\centering
\includegraphics[width=0.4in]{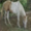}
\end{minipage}%
\begin{minipage}[t]{0.17\linewidth}
\centering
\includegraphics[width=0.4in]{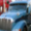}
\end{minipage}
\begin{minipage}[t]{0.17\linewidth}
\centering
\includegraphics[width=0.4in]{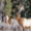}
\end{minipage}
\begin{minipage}[t]{0.17\linewidth}
\centering
\includegraphics[width=0.4in]{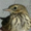}
\end{minipage}
}%

\subfigure[Images Reversed by The Second Glow]{
\begin{minipage}[t]{0.17\linewidth}
\centering
\includegraphics[width=0.4in]{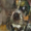}
\end{minipage}
\begin{minipage}[t]{0.17\linewidth}
\centering
\includegraphics[width=0.4in]{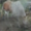}
\end{minipage}
\begin{minipage}[t]{0.17\linewidth}
\centering
\includegraphics[width=0.4in]{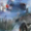}
\end{minipage}
\begin{minipage}[t]{0.17\linewidth}
\centering
\includegraphics[width=0.4in]{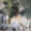}
\end{minipage}
\begin{minipage}[t]{0.17\linewidth}
\centering
\includegraphics[width=0.4in]{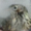}
\end{minipage}
}%

\centering
\caption{ \label{fig:compare-glows}\textbf{Comparing the two Glows trained on original data and encrypted data.} The first row are original CIFAR-10 images. We extract their features by the glow trained on original CIFAR-10 images, and reverse the features by the glow trained on encrypted CIFAR-10 images to get the second row's images. }
\end{figure}

\textbf{Quantitative Evaluations.}  For quantitative evaluations, we use 
Bit per dimension (BPD) to evaluate the similarity. BPD is defined as the negative log likelihood (with base 2) divided by the number of pixels.

In this experiment, we train Glow models in both original dataset and  encrypted dataset. Notice that the models are trained using data points from all classes. The results are shown in Table~\ref{tab:compare-glow}. As we can see, the BPDs of both original dataset and enrypted dataset are close, for both CIFAR-10 and ImageNet. By contrast, the BPDs for Glows without any training (i.e., at its random initialization) are much larger. In other words, from the BPD perspective, the Glow models trained on original and encrypted datasets are similar.

\textbf{Qualitative Experiments.} 
In this experiment, we train two Glow models, on the original CIFAR-10 and the encrypted CIFAR-10 datasets, respectively. We then first extract the feature of  a given original image using the first model, and then use the second model to recover the original image. See Figure~\ref{fig:compare-glows} for the final results. 

Although they are not exactly the same (because the two Glow models are different), we can tell that they are visually similar to each other. We believe that this shows the two models contain lots of common information. Since our encryption method ensures the security of the training data, it is a good way to let others to train  generative models without sharing the training data. 



\subsection{Data leakage with gradients}
\label{experiments:leakage}

\begin{table}[ht]
\centering
\caption{\label{tab:leak-rotate-CIFAR-10}\textbf{Classification Results on Leaked encrypted CIFAR-10.}
We steal the encrypted CIFAR-10 images from gradients and use the leaked images to train classifiers. This table shows the test accuracies.}
\begin{tabular}{ccc}
\hline
CIFAR-10& Original Test& Encrypted Test\\
\hline
ResNet50& 9.49& 42.97\\
DenseNet121& 10.3& 39.11 \\
\hline
\end{tabular}
\end{table}

\begin{table}[ht]
\centering
\caption{\label{tab:leak-rotate-lsun}\textbf{Classification Results on Leaked encrypted LSUN(1).}  We steal the encrypted LSUN(1) images from gradients and use the leaked images to train classifiers. This table shows the accuracies. }
\begin{tabular}{ccc}
\hline
LSUN-Sampled& Original Test& Encrypted Test\\
\hline
ResNet50& 46.39& 59.40 \\
DenseNet121& 44.79& 51.35 \\
\hline
\end{tabular}
\end{table}

\begin{figure}[htbp]
\centering

\subfigure[Original Image]{
\begin{minipage}[t]{0.3\linewidth}
\centering
\includegraphics[width=0.5in]{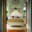}
\end{minipage}%
}%
\subfigure[Encrypted Image]{
\begin{minipage}[t]{0.35\linewidth}
\centering
\includegraphics[width=0.5in]{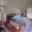}
\end{minipage}%
}%
\subfigure[Leaked Image]{
\begin{minipage}[t]{0.3\linewidth}
\centering
\includegraphics[width=0.5in]{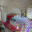}
\end{minipage}%
}%

\centering
\caption{ \label{fig:leak-figure}Visualization result of original data, encrypted data and leaked data.}
\end{figure}

In this subsection, we run experiments for data leakage with gradients, as discussed in Section~\ref{sec:app:leak}.

\textbf{Experimental Setup.}  
Our experimental settings are generally following \citet{zhu2019deep}.
We use L-BFGS\cite{liu1989limited} as our optimizer with learning rate $1$. We set batch size $1$ and iteration $100$ for every image. To meet the requirements that the model needs to be twice-differentiable, we replace activation ReLU to Sigmoid and remove strides. For the labels, we randomly initialize a continuous vector instead of directly optimizing the discrete vector. For the inputs, we randomly initialize a continuous vector with Shape $N\times C\times H \times W$. Here the $C=3$, $N=1$ and $H=W=32$.  
For convenience, all the experiments are using randomly initialized weights. We apply the method described in Section~\ref{sec:app:leak}, i.e., all the agents will share the same secret orthogonal matrices, and use those matrices for data encryption.

\textbf{Experimental Results.}  In Figure \ref{fig:leak-figure}, We show 
the leaked images using gradient information. 
The recovery of the encrypted image is successful, but as we proved in Theorem~\ref{main-thm}, the adversary cannot further recovery the original image from the encrypted image. 


We also steal all the encrypted images using the gradient information for both CIFAR-10 and LSUN(1) datasets, and use those  images for training. See Table~\ref{tab:leak-rotate-CIFAR-10} and Table \ref{tab:leak-rotate-lsun} for details. 
We can see that the trained model has very bad accuracies on the original test data, and slightly higher test accuracy on the encrypted test data. 

To sum up, 
leaking data from gradients can only get the encrypted data, so our encryption keeps the private data safe. Moreover, from our experimental results, the accuracy of model trained on leaked data does not work on the original test data, therefore is useless to the adversary for classification tasks.

\section{Conclusion}

In this paper, we proposed a data encryption algorithm for theoretically ensuring the privacy while preserving useful information from the data. 
Our method is a building block which can be applied in many scenarios in machine learning. We give theoretically guarantees for the security of our encrypted algorithm, and our experiments demonstrated that our methods are useful in real datasets like LSUN and CIFAR-10. For the future work, we hope to apply our method to other interesting problems. 
\section*{Acknowledgements}
This work has been supported in part by the Zhongguancun Haihua Institute for Frontier Information Technology.
Also thank Jiaye Teng, Haowei He and Zixin Wen for the helpful discussions.



\bibliography{bibfile}
\bibliographystyle{icml2020}


\end{document}